\newtheorem{definition}{Definition}
\newtheorem{lemma}{Lemma}
\newtheorem{theorem}{Theorem}
\newtheorem{remark}{Remark}
\newcommand{\R}{\mathbb{R}}
\newcommand{\E}{\mathbb{E}}
\newcommand{\g}{\gamma}
\title{
A Convergent Off-Policy Temporal Difference Algorithm 
}
\author{Raghuram Bharadwaj Diddigi$^{*}$ Chandramouli Kamanchi$^{*}$ Shalabh Bhatnagar% <-this % stops a space
\thanks{$^{*}$ Equal Contribution.}% <-this % stops a space
\thanks{This work was supported by the Robert Bosch Centre for Cyber-Physical Systems, Indian Institute of Science and a
grant from the Department of Science and Technology, Government of India.}
\thanks{R. B. Diddigi and C. Kamanchi are with the Department of Computer
Science and Automation, Indian Institute of Science, Bengaluru 560012,
India (e-mail: raghub@iisc.ac.in; chandramouli@iisc.ac.in.).}%
\thanks{S. Bhatnagar is with the Department of Computer Science and
Automation and the Department of Robert Bosch Centre for Cyber-Physical
Systems, Indian Institute of Science, Bengaluru 560012, India (e-mail:
shalabh@iisc.ac.in).}%
}
\begin{document}

\maketitle
\thispagestyle{empty}
\pagestyle{empty}
%%%%%%%%%%%%%%%%%%%%%%%%%%%%%%%%%%%%%%%%%%%%%%%%%%%%%%%%%%%%%%%%%%%%%%%%%%%%%%%%
\begin{abstract}
Learning the value function of a given policy (target policy) from the data samples obtained from a different policy (behavior policy) is an important problem in Reinforcement Learning (RL). This problem is studied under the setting of off-policy prediction. Temporal Difference (TD) learning algorithms are a popular class of algorithms for solving the prediction problem. TD algorithms with linear function approximation are shown to be convergent when the samples are generated from the target policy (known as on-policy prediction). However, it has been well established in the literature that off-policy TD algorithms under linear function approximation diverge. In this work, we propose a convergent on-line off-policy TD algorithm under linear function approximation. The main idea is to penalize the updates of the algorithm in a way as to ensure convergence of the iterates. We provide a convergence analysis of our algorithm. Through numerical evaluations, we further demonstrate the effectiveness of our algorithm.    
\end{abstract}
% \begin{IEEEkeywords}
% Minimax Q-learning, , stochastic approximation, zero-sum games.
% \end{IEEEkeywords}

\section{Introduction}
The two important problems in Reinforcement Learning (RL) are Prediction and Control \cite{bertsekas1996neuro}. The prediction problem deals with computing the value function of a given policy. In a discounted reward setting, value function refers to the total expected discounted reward obtained by following the given policy. The control problem refers to computing the optimal policy, i.e., the policy that maximizes the total expected discount reward. When the model information (probability transition matrix and single-stage reward function) is fully known, techniques like value iteration and policy iteration are used to solve the control problem. Policy iteration is a two-step iterative algorithm where the task of prediction is performed in the first step for a given policy followed by the policy improvement task in the second step. However, in most of the practical scenarios, the model information is not known and instead (state, action, reward and next-state) samples are only available. Under such a model-free setting, popular RL algorithms for prediction are Temporal Difference (TD) and for control are Q-Learning and Actor-Critic algorithms \cite{sutton2018reinforcement}. Actor-Critic algorithms can be seen as model-free analogs of the policy iteration algorithm and involve a model-free prediction step. Therefore, it is clear that model-free prediction is an important problem for which optimal and convergent solutions are desired.

TD algorithms under the tabular approach (where there is no approximation of the value function) are a very popular class of algorithms for computing the exact value function of a given policy (henceforth referred to as target policy) from samples. In many of the real-life problems though, we encounter situations where the number of states is large or even infinite. In such cases, it is not possible to use tabular approaches and one has to resort to approximation based methods. TD algorithms are shown to be stable and convergent under linear function approximation, albeit under the setting of on-policy. On-policy refers to the setting where state and action samples are obtained using the target policy itself.
%That is, states are sampled from the stationary distribution of the target policy, actions are picked from target policy leading to a reward and next state. 
As we approach practical scenarios, it can be noted that such samples are not always available to the practitioner. For example in games, say a practitioner would like to evaluate a (target) strategy. However, the data available to her might be from a player following a different strategy. The question that arises in this scenario is whether she can make use of this data and still evaluate the target strategy. These problems are studied under the setting of off-policy prediction where the goal is to evaluate the value function of the target policy from the data generated from a different policy (commonly referred to as behavior policy). The recent empirical success of the Deep Q-Learning algorithm (model-free control algorithm) motivates us to understand its convergence behavior, which is a very difficult problem. It has been noted in (Section 11.3 of \cite{sutton2018reinforcement}) that convergence and stability issues arise when we combine three components - function approximation, bootstrapping (TD algorithms) and off-policy learning, what they refer to as the ``deadly triad". 

In our work, we propose an online off-policy stable TD algorithm for a prediction problem under linear function approximation. The idea is to penalize the parameters of the TD update to mitigate the divergence problem. We note here that the recent work \cite{ghiassian2018online} provides a comprehensive and excellent survey of algorithms for off-policy prediction problems and performs a comparative study. However, for the sake of completeness, we now discuss some of the important and relevant works on the off-policy prediction problem. In \cite{bradtke1996linear}, Least-Squares TD algorithms (LSTD) with linear function approximation have been proposed that are shown to be convergent under both on-policy and off-policy settings. However, the per-step complexity of LSTD algorithms is quadratic in the number of parameters. In \cite{precup2001off}, off-policy TD algorithms are proposed that make use of an importance sampling idea to convert the expected value of total discounted reward under behavior policy to expected value under target policy. However, the variance of such algorithms is very high and in some cases tends to be infinite. In \cite{sutton2008convergent}, the Gradient TD (GTD) algorithm has been proposed that is stable under off-policy learning and linear approximation and has linear (in the number of parameters) complexity. Since then, there have been a lot of improvements on the GTD algorithm under various settings like prediction, control, and non-linear function approximation \cite{sutton2009fast,bhatnagar2009convergent,maei2010gq,maei2010toward}. The idea of adding the penalty in the form of a regularization term has been considered in \cite{liu2012regularized} where Regularized off-policy TD (RO-TD) algorithm has been proposed based on GTD algorithms and convex-concave saddle point formulations. Emphatic TD algorithms (ETD) \cite{sutton2016emphatic,yu2015convergence,hallak2016generalized,ghiassian2017first} are another popular class of off-policy TD algorithms that achieve stability by emphasizing or de-emphasizing updates of the algorithm. These updates also have linear-time complexity. Moreover, these algorithms learn only one set of parameters, unlike GTD algorithms which are two-time scale stochastic approximation algorithms that learn two sets of parameters. Recently in \cite{hallak2017consistent,gelada2019off}, a co-variance off-policy TD (COP-TD) algorithm has been proposed that includes a co-variance shift term in the TD update. This shift term is also learned along with the parameter of the algorithm.  

Our algorithm, like the Emphatic TD algorithm, trains only one set of parameters and like ETD and GTD algorithms, has per-update complexity that is linear in the number of parameters. The contributions of our paper are as follows:
\begin{itemize}
    % \item We propose a novel 
    % regularized projective bellman error metric that enables us to propose convergent off-policy TD learning algorithms.
    \item We derive an online off-policy TD learning algorithm with linear function approximation. Our algorithm has linear per-iteration computational complexity in the number of parameters.
    \item We prove the convergence of our algorithm utilizing the techniques of \cite{tsit_roy,sutton2016emphatic}.
    \item We show the empirical performance of our algorithm on standard benchmark RL environments.
\end{itemize}
The rest of the paper is organized as follows. In Section II, we introduce the background and preliminaries. We propose our algorithm in Section III. Sections IV and V describe the analysis of our algorithm. Section VI presents the results of our numerical experiments. Finally, Section VII presents concluding remarks and future research directions.

\section{Background and Preliminaries}
We consider a Markov Decision Process (MDP) of form $(S,U,p,r,\gamma)$ where $S$ denotes the state space. $U$ is the set of actions, $p$ is a probability transition matrix where $p(s'|s,a)$ denotes the probability of system transition to state $s'$ when action $a$ is chosen in state $s$. $r$ is the single-stage reward where $r(s,a)$ denotes the reward obtained by taking action $a$ in state $s$. Finally, $\gamma$ denotes the discount factor. Let $\pi: S \xrightarrow{} \Delta(U)$ be the target policy where $\Delta(U)$ denotes the set of probability distributions over actions. The objective of the MDP prediction problem is to estimate the value function ($V^{\pi}$) of the target policy $\pi$, where the value function of a state $s \in S$ denoted by $V^{\pi}(s)$ is given by: 
\begin{align}
    V^\pi(s) = \E \Big{[} \sum_{i=0}^{\infty} \gamma^i r(s_i, a_i) \Big{|} s_0 = s, \pi \Big{]},
\end{align}
where the state-action trajectory $(s_0,a_0,s_1,\ldots)$ is obtained following the policy $\pi$ and $\E[.]$ denotes the expectation. 

As the number of states of the MDP can be very large, we resort to approximation techniques to compute the value function. In our work, we consider the linear function approximation architecture where 
\begin{align} \label{app-v}
    \widehat{V}(s) = \theta^{T}\phi(s),
\end{align}
where $\widehat{V}(s)$ denotes the approximate value function associated with state $s$ (that we desire to be very close to the exact value function), $\phi(s)$ is a $d\times1$ feature vector associated with state $s$ and $\theta$ is a $d\times1$ weight vector. Note that the exact value function $V^{\pi}$ may not be representable by \eqref{app-v}. Therefore, the objective is to estimate the weight vector $\theta$ so that the approximate value function denoted by \eqref{app-v} is as close as possible to the exact value function. 

The on-policy TD(0) \cite{sutton2018reinforcement} is a popular on-line algorithm for computing the weight vector $\theta$. The update equation is given by:
\begin{align}\label{optd}
    \theta_{n+1} = \theta_{n} + \alpha_n(r_{n}+ \gamma \theta_n^{T}\phi(s_{n+1})  - \theta_n^{T}\phi(s_n))\phi(s_n),
\end{align}
where $(s_n,r_n,s_{n+1})$ is the state, reward and next state samples obtained at time $n$, $\alpha_n, ~ n \geq 0$ is the step-size sequence and $\theta_0$ denotes the initial parameter vector.  

The stability of the on-policy TD(0) algorithm is well established in the literature \cite{sutton2016emphatic}. We outline the proof of the convergence of this algorithm. Following the notation of \cite{sutton2016emphatic}, please note that the update rule \eqref{optd} can be re-written as:
\begin{align}\label{ropue}
     \theta_{n+1} = \theta_{n} + \alpha_n (b_n - A_n\theta_n),
\end{align}
where $A_n = \phi(s_n)(\phi(s_n) - \gamma \phi(s_{n+1}))^{T}$ and $b_n = r_{n+1}\phi(s_n)$.

It is shown in \cite{sutton2018reinforcement} that the algorithm with update rule \eqref{ropue} is stable if the matrix $A$ given by:
\begin{align}\label{keyM}
    A = \lim_{n \xrightarrow{} \infty} A_n = \Phi^{T}D_{\pi}(I - \gamma P_{\pi})\Phi
\end{align}
is positive definite. In \eqref{keyM}, $\Phi$ is a $|S| \times d$ matrix with the feature vector $\phi(s)$ in row $s$. $D_\pi$ is the $|S| \times |S|$ diagonal matrix with the diagonal being the stationary distribution of state $i$ obtained under policy $\pi$. Finally, $P_\pi$ is a $|S| \times |S|$ matrix with $[P_\pi]_{ij} = \sum_{a} \pi(i,a)p(j|i,a)$. For the on-policy TD(0) algorithm, $A$ is shown to be positive definite \cite{sutton2016emphatic} proving the stability of the algorithm. 

In the off-policy prediction problem, the data samples are obtained from a behavior policy $\mu$ instead of the target policy $\pi$. In this case, the off-policy TD(0) update \cite{sutton2016emphatic} is given by:
\begin{align}
    \theta_{n+1} = \theta_{n} + \alpha_n \rho_n \left(r_{n}+ \gamma \theta_n^{T}\phi(s_{n+1})  - \theta_n^{T}\phi(s_n)\right)\phi(s_n),
\end{align}
where $r_n$ is the reward obtained by taking action $a_n$ in state $s_n$ and $\rho_n$ is the importance sampling ratio given by $\frac{\pi(s_n,a_n)}{\mu(s_n,a_n)}$. The corresponding matrix $A$ for this algorithm is given by:
\begin{align}\label{keyoff}
     A = \Phi^{T}D_{\mu}(I - \gamma P_{\pi})\Phi,
\end{align}
where $D_{\mu}$ is a diagonal matrix with diagonal of $D_\mu$ being the stationary distribution obtained under policy $\mu$. 

The matrix $A$ defined in \eqref{keyoff} need not be positive definite \cite{sutton2016emphatic}. Therefore stability and convergence of the off-policy TD(0) are not guaranteed.

The off-policy TD(0) algorithm, if it converges, may perform comparably to some of the off-policy convergent algorithms in the literature. For example, in Figure 5 of \cite{ghiassian2018online}, it has been shown that the performance of off-policy TD(0) is comparable to the GTD(0) algorithm. However, as the algorithm is not stable, off-policy TD(0) can diverge. In this paper, we propose a simple and stable off-policy TD algorithm. In the next section, we propose our algorithm and in Section IV, we provide its convergence analysis.
\section{The Proposed Algorithm} \label{PA}
% \begin{algorithm}[H] 
\begin{algorithm}[t] 
\caption{Perturbed Off-Policy Prediction Algorithm}\label{alg:OPA}
\hspace*{\algorithmicindent} \textbf{Input:}\\
\hspace*{\algorithmicindent} $\mu,\pi$: behaviour and target policies respectively \\
\hspace*{\algorithmicindent} $(s_n,a_n,r_n)^{\infty}_{n=0}$: data from behaviour policy \\
\hspace*{\algorithmicindent} $\theta_0$: initial parameter vector \\
\hspace*{\algorithmicindent} $\g$: discount factor\\
\hspace*{\algorithmicindent} $\phi(s)$: feature vector of state $s$\\
\hspace*{\algorithmicindent} $\eta$: non-negative regularization parameter\\
\hspace*{\algorithmicindent} $\{\alpha_n\}$: step-size sequence\\
\hspace*{\algorithmicindent} Iter: total number of iterations\\
\hspace*{\algorithmicindent} \textbf{Output:}  $\theta_{\text{Iter}}$
\begin{algorithmic}[1]
\Procedure{Off-Policy Prediction:}{}
\While{$n < $ Iter}
\State $\rho_{n}=\frac{\pi(s_n,a_n)}{\mu(s_n,a_n)}$
% \State $\delta_n=r_n+\g\phi(s_{n+1})^T\theta_n-\phi(s_{n})^T\theta_n-\eta\phi(s_{n})^T\theta_n$
\State $\delta_n=r_n+\g\phi(s_{n+1})^T\theta_n-(1+\eta)\phi(s_{n})^T\theta_n$
\State $\theta_{n+1}=\theta_n+\alpha_n\rho_n\delta_n \phi(s_n) $
\EndWhile
\State \textbf{return} $\theta_{\text{Iter}}$ 
\EndProcedure
\end{algorithmic}
\label{algo:PA}
\end{algorithm}

The input to our algorithm is the target policy, whose value function we want to estimate and the behavior policy, using which the samples are generated. Also, provided as an input to our algorithm is the regularization parameter ($\eta$). The algorithm works as follows. At each time step $n$, we obtain a sample $(s_n,a_n,r_n,s_{n+1})$ using which importance sampling is computed as shown in the Step 3. We then compute our modified temporal difference term as show in Step 4. Finally, the parameters of the algorithm are updated as shown in Step 5. 

\begin{remark}
It is clear from the Algorithm \ref{algo:PA} that, the per-step complexity is $\mathcal{O}(d)$, where $d$ is the number of parameters.
\end{remark}

% The algorithm given behaviour and target policies proceeds to calculate importance sampling weight $\rho_n$ and updates parameter $\theta_n$ by estimating temporal differences $\delta_n$ as shown in steps 4 and 5 of the algorithm in an online manner. 
% Our algorithm unlike GTD family of algorithms and similar to Emphatic TD algorithm has linear per iteration complexity in the number of parameters.
\begin{remark}
The choice of $\eta$ is critical in our algorithm. Larger values of $\eta$ ensure convergence (see Lemma \ref{eta_bound}) and smaller values of $\eta$ ensure more accurate solution (see Lemma \ref{T_approx})  
\end{remark}
In the next section, we provide the convergence analysis of our proposed algorithm. 
\section{Convergence Analysis}
The update rule of Algorithm \ref{algo:PA} can be rewritten as follows.
\begin{align*}
    \theta_{n+1}&=\theta_n+\alpha_n\rho_n\delta_n \phi(s_n)\\
    &=\theta_n+\alpha_n(b_n-A_n\theta_n), 
\end{align*}
where $A_n$ and $b_n$ are given by
\begin{align}
    A_{n}&=-\rho_n\big{(}\gamma\phi(s_n)\phi(s_{n+1})^T-(1+\eta)\phi(s_n)\phi(s_n)^T\big{)}, \label{A_n} \\
    b_{n}&=\rho_nr_n\phi(s_n). \label{b_n}
\end{align}
We state and invoke Theorem 2 (also see Th. 17, p. 239 of  \cite{benveniste2012adaptive}) of \cite{tsit_roy} to show the convergence of our algorithm.
\begin{theorem}
Consider an iterative algorithm of the form
\begin{align*}
\theta_{n+1}=\theta_{n}+a_{n}\left(b(X_n)-A(X_n)\theta_{n}\right)
\end{align*}
where
\begin{enumerate}
    \item the step-size sequence satisfies $\sum a_n=\infty$, $\sum a^2_n < \infty.$
    \vspace*{0.1 cm}
    \item $X_{n}$ is a Markov process with a unique invariant distribution.
    \vspace*{0.1 cm}
    \item $A=\E_0[A(X_n)]$ and $b=\E_0[b(X_n)].$ Here $\E_0$ is the expectation with respect to the stationary distribution of the Markov chain given by the behaviour policy $\mu.$
    \vspace*{0.1 cm}
    \item The matrix $A$ is positive definite.
    \vspace*{0.1 cm}
    \item There exist positive constants $C,q$ and a positive real valued function $h$ from the states of the Markov chain $\{X_{n}\}$ such that $\sum^{\infty}_{n=0}\|\E[A(X_{n})|X_{0}=X]-A\|\leq C\left(1+h^q(X)\right)$ and 
    $\sum^{\infty}_{n=0}\|\E[b(X_{n})|X_{0}=X]-b\|\leq C\left(1+h^q(X)\right).$
    \vspace*{0.1 cm}
    \item For any $q>1$ there exists a constant $\kappa_{q}$ such that for all $X$ and $n$, 
    $\E[h^q(X_{n})|X_{0}=X]\leq \kappa_{q}\left(1+h^q(X)\right).$
\end{enumerate}
Under these assumptions, i.e. 1-6 above, it is known that $\theta_n$ converges to the solution of $b-A\theta=0.$
\end{theorem}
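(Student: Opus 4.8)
The plan is to treat this as an instance of the general convergence theory for stochastic approximation with Markov noise; indeed the statement is Theorem~2 of \cite{tsit_roy} (equivalently Th.~17, p.~239 of \cite{benveniste2012adaptive}), so at the top level the proof is an appeal to that result, and what follows sketches the argument underneath it in the order I would present it. The starting point is to identify the limiting ODE: with $h(\theta) := b - A\theta$, the recursion $\theta_{n+1} = \theta_n + a_n(b(X_n) - A(X_n)\theta_n)$ is a noisy Euler discretization of $\dot\theta = h(\theta)$. Assumption~4 forces the unique equilibrium $\theta^\star := A^{-1}b$ to be globally asymptotically stable: with $W(\theta) = \tfrac12\|\theta - \theta^\star\|^2$ one gets $\dot W(\theta) = -(\theta-\theta^\star)^T A(\theta-\theta^\star) < 0$ for $\theta \neq \theta^\star$, using that $x^T A x = x^T(\tfrac{A+A^T}{2})x$ and the symmetric part of a positive definite matrix is positive definite. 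This is where Assumption~4 is used and where the claimed limit point ($b - A\theta^\star = 0$) is pinned down.

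The next step is to handle the Markov noise. Since $X_n$ is a Markov process, the term $\xi_{n+1} := (b(X_n) - A(X_n)\theta_n) - (b - A\theta_n)$ is not a martingale-difference sequence, so martingale convergence cannot be invoked directly. The standard remedy is to pass to solutions of the Poisson equation for $\{X_n\}$ (for instance a function $\hat A(\cdot)$ with $\hat A(X) - \E[\hat A(X_1)\mid X_0 = X] = A(X) - A$, and likewise for $b$); their existence with polynomial growth is exactly what the summability bounds in Assumption~5 provide, Assumption~2 (unique invariant distribution) makes the relevant ergodic averages well defined, and Assumption~6 supplies the moment control needed to carry the growth function $h$ through the iteration. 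Substituting the Poisson solutions decomposes $\sum_n a_n \xi_{n+1}$ into a genuine martingale term --- which converges almost surely by Assumption~1 ($\sum a_n^2 < \infty$) and the martingale convergence theorem --- plus telescoping and higher-order remainders that vanish because $a_n \to 0$. Hence the iterates asymptotically track the ODE.

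Finally I would establish almost-sure boundedness of $\{\theta_n\}$, which is needed before the tracking argument can be closed. Linearity makes this part clean: the scaled fields $h_c(\theta) := h(c\theta)/c$ converge as $c \to \infty$ to $h_\infty(\theta) = -A\theta$, whose origin is globally asymptotically stable (Assumption~4 once more), so the Borkar--Meyn stability test gives $\sup_n \|\theta_n\| < \infty$ almost surely. Combining boundedness with the ODE-tracking conclusion yields $\theta_n \to \theta^\star = A^{-1}b$ almost surely, which is the assertion.

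I expect the real work to be concentrated in the second step: checking that Assumptions~5 and 6 genuinely produce Poisson-equation solutions of controlled growth and that the resulting decomposition is summable against $\{a_n\}$. The Lyapunov/ODE computation and the boundedness argument are essentially routine once $A$ is known to be positive definite, so the Markov-noise bias is the crux.
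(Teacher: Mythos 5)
Your proposal is correct in identifying that this theorem is not proved in the paper at all: the authors simply state it and cite Theorem~2 of \cite{tsit_roy} (equivalently Th.~17, p.~239 of \cite{benveniste2012adaptive}), and your top-level appeal to that result is exactly what the paper does. The additional sketch you give underneath --- ODE limit $\dot\theta = b - A\theta$, global asymptotic stability of $A^{-1}b$ from positive definiteness via the quadratic Lyapunov function, and Poisson-equation decomposition of the Markov noise using Assumptions~5 and~6 --- faithfully reflects how the cited result is actually established, so it is a reasonable expansion rather than a divergence. The only substantive quibble is your use of the Borkar--Meyn scaled-ODE stability test for almost-sure boundedness: that criterion postdates both cited references, and in the linear setting of Benveniste et al.\ (Th.~17) boundedness is obtained directly from the linear structure and the positive definiteness of $A$ rather than by a scaling argument; your route is valid but is not the one underlying the citation.
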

To begin we define the process $X_n$ as follows. Let $X_n=(s_n,a_n,s_{n+1}).$ Observe that $X_n$ is a Markov chain. In particular, $s_{n+1}$ is a deterministic function of $X_n$ and the distribution of $a_{n+1}$ and $s_{n+2}$ depends only on $s_{n+1}$.
Also note that, in our algorithm, $A(X_n)=A_n$ and $b(X_n)=b_n$ given by equations \eqref{A_n} and \eqref{b_n} respectively with $X_n=(s_n,a_n,s_{n+1})$. 

Assumptions 1 and 2 are fairly general. The assumptions 5 and 6 can be shown to hold with $h$ as a constant function for finite state-action MDPs and the arguments are similar to those in theorem 1 of \cite{tsit_roy}. Therefore, the most important assumption to verify is that the matrix $A$ is positive definite. In this section, we prove that $A$ is positive definite, thereby proving the convergence of our proposed algorithm. 

We begin by proving some important lemmas that are used in our main theorem. 
% Here we show that $A$ is positive definite for a suitable choice of $\eta$ and the rest of the analysis follows from the arguments of \cite{tsit_roy}
\begin{lemma}
Let $\Phi$ be the $|S| \times d$ matrix where the $i^{th}$ row of $\Phi$ is given by $\phi(i)$, the feature vector of state $i$ and $r_{\pi}$ be the $|S| \times 1$ vector where the $i^{th}$ component is given by $r_{\pi}(i)=\sum_{a \in A}r(i,a)\pi(i,a)$.
Let $\E_0$ be the expectation with respect to the stationary distribution of the Markov chain realized by $\mu$. Then
$A=\E_0[A_{n}]$ and $b=\E_0[b_{n}]$ are given by
\begin{align*}
A &=\Phi^TD_{\mu}\left((1+\eta)I-\gamma P_{\pi}\right)\Phi,\\
b &= \Phi^TD_{\mu}r_{\pi},
\end{align*}
where $D_{\mu}$ is a diagonal matrix with the $i^{th}$ diagonal element being $d_{\mu}(i)$.
\end{lemma}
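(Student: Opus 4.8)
The plan is to evaluate the two stationary expectations directly from the definitions \eqref{A_n} and \eqref{b_n} by averaging against the stationary law of the triple $X_n=(s_n,a_n,s_{n+1})$ under the behaviour policy $\mu$. First I would write down that joint law: in stationarity $s_n\sim d_\mu$ (the stationary distribution of $P_\mu$), then $a_n$ is drawn from $\mu(s_n,\cdot)$ and $s_{n+1}$ from $p(\cdot\mid s_n,a_n)$, so $X_n=(i,a,j)$ has stationary probability $d_\mu(i)\,\mu(i,a)\,p(j\mid i,a)$.

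Next I would substitute $\rho_n=\pi(s_n,a_n)/\mu(s_n,a_n)$ and exploit the key cancellation: the factor $\mu(i,a)$ in the stationary weight cancels the $\mu(i,a)$ in the denominator of $\rho_n$, leaving $\pi(i,a)$ and thereby converting the average under $\mu$ into one under $\pi$. Concretely, $\E_0[\rho_n\phi(s_n)\phi(s_{n+1})^T]=\sum_i d_\mu(i)\,\phi(i)\big(\sum_a\pi(i,a)\sum_j p(j\mid i,a)\phi(j)\big)^T$, and the inner double sum is exactly the $i$-th row of $P_\pi\Phi$, so this equals $\Phi^TD_\mu P_\pi\Phi$. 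Summing out $j$ in $\E_0[\rho_n\phi(s_n)\phi(s_n)^T]$ gives $\sum_i d_\mu(i)\big(\sum_a\pi(i,a)\big)\phi(i)\phi(i)^T=\Phi^TD_\mu\Phi$, and weighting by $r(i,a)$ gives $\E_0[\rho_n r_n\phi(s_n)]=\sum_i d_\mu(i)\,\phi(i)\sum_a\pi(i,a)r(i,a)=\Phi^TD_\mu r_\pi$. Combining the first two expectations with the coefficients $-\gamma$ and $+(1+\eta)$ appearing in \eqref{A_n} yields $A=\Phi^TD_\mu\big((1+\eta)I-\gamma P_\pi\big)\Phi$, and the last identity is precisely $b=\Phi^TD_\mu r_\pi$.

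There is no deep obstacle here; once the stationary law of $X_n$ is in hand the argument is routine linear algebra. The only points needing a little care are (i) making the importance-sampling cancellation legitimate — in particular noting that $\mu(i,a)>0$ whenever $\pi(i,a)>0$, so $\rho_n$ is well defined wherever it carries probability mass — and (ii) bookkeeping the transpose and row-versus-column conventions so that $\sum_i d_\mu(i)\phi(i)\,(P_\pi\Phi)_i^T$ is correctly recognised as the matrix product $\Phi^TD_\mu P_\pi\Phi$.
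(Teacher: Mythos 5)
Your proposal is correct and follows essentially the same route as the paper: write the stationary law of $(s_n,a_n,s_{n+1})$ as $d_\mu(i)\mu(i,a)p(j\mid i,a)$, cancel $\mu(i,a)$ against the denominator of $\rho_n$ to convert the action average to one under $\pi$, and recognise the resulting sums as $\Phi^TD_\mu P_\pi\Phi$, $\Phi^TD_\mu\Phi$ and $\Phi^TD_\mu r_\pi$. Your explicit handling of the summations over $j$ and $a$ (using $\sum_j p(j\mid i,a)=1$ and $\sum_a\pi(i,a)=1$) is, if anything, slightly more careful than the paper's computation.
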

\begin{proof}
\begin{align*}
\E_{0}[A_n]&=-\E_{0}\left[\rho_n\big{(}\gamma\phi(s_n)\phi(s_{n+1})^T-(1+\eta)\phi(s_n)\phi(s_n)^T\big{)}\right]\\
&=-\sum_{i,j \in S, a \in U}\mu(i,a)\bigg{[} \frac{\pi(i,a)}{\mu(i,a)} \big{(}\g\phi(i)\phi(j)^T d_{\mu}(i) p(j|i,a)\\
& \hspace{3cm}-(1+\eta)d_{\mu}(i)\phi(i)\phi(i)^T\big{)}\bigg{]}\\
% \end{align*}
% \begin{align*}
& =-\sum_{i,j}d_{\mu}(i)  \big{(}\g\phi(i)\phi(j)^T  p_{\pi}(j|i)-(1+\eta)\phi(i)\phi(i)^T\big{)}\\
& =\Phi^TD_{\mu}((1+\eta)I-\g P_{\pi})\Phi.
\end{align*}
Similarly 
\begin{align*}
    b & = \E_{0}[b_n] = \E_{0}[\rho_n r_n\phi(s_n)] = \Phi^TD_{\mu}r_{\pi}.
\end{align*}
\end{proof}

\begin{definition}
A  $d\times d$ matrix $M$ is positive definite if for all $0 \neq y \in \R^{d}$, $y^TMy > 0.$
\end{definition}
\begin{lemma}\label{pd_lemma}
Given a $d\times d$ matrix $M$, $M$ is positive definite iff the symmetric matrix $S=M+M^T$ is positive definite.
\end{lemma}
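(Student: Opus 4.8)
The plan is to argue directly from the quadratic-form definition of positive definiteness just introduced, which does \emph{not} presuppose symmetry of the matrix. The single observation that drives everything is that for any $y \in \R^d$ the quantity $y^T M y$ is a $1\times1$ matrix, hence equals its own transpose: $y^T M y = (y^T M y)^T = y^T M^T y$.

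From there the key step is the identity $y^T S y = y^T (M + M^T) y = y^T M y + y^T M^T y = 2\, y^T M y$, valid for every $y$. Taking $y \neq 0$, this shows $y^T S y > 0$ if and only if $y^T M y > 0$; since $S$ is symmetric by construction, the left-hand condition (ranging over all nonzero $y$) is exactly ``$S$ is positive definite'', while the right-hand condition is exactly ``$M$ is positive definite'' in the sense of the Definition above. Both implications of the ``iff'' are thus obtained simultaneously from the same identity, so I would not split the argument into two cases.

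There is essentially no obstacle here; the only point requiring care is bookkeeping about the definition, namely to apply the stated notion of positive definiteness to the possibly non-symmetric matrix $M$ rather than silently assuming symmetry. Once that is kept in mind, the chain of equalities above is the entire proof.
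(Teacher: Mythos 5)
Your argument is correct and is essentially identical to the paper's own proof: both rest on the scalar-transpose identity $y^T M^T y = (y^T M y)^T = y^T M y$, which gives $y^T S y = 2\, y^T M y$ and yields both directions of the equivalence at once. No gaps.
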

\begin{proof}
For $0 \neq y \in \R^{d}$ observe that
\begin{align*}
    y^TSy=y^TMy+y^TM^Ty=2y^TMy,
\end{align*}
since $(y^TMy)^T=y^TMy$ as both are scalars and $y^TM^Ty=(y^TMy)^T$.
Hence $S$ is positive definite if and only if $M$ is positive definite
\end{proof}
\begin{theorem}
\label{eta_bound}
Suppose $M=D\left((1+\eta)I-\gamma P\right)$ where $D$ is a diagonal matrix with positive diagonal entries, $P$ is a Markov matrix and $\eta\geq\max\left(\displaystyle \max_{i} \frac{\gamma d^Tp(.|i)}{d_i}-1,0\right)$ and $0<\gamma<1$ are positive constants. Then $M=[m_{ij}]$ is positive definite.
\end{theorem}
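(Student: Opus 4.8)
The plan is to apply Lemma~\ref{pd_lemma} to reduce positive definiteness of the non-symmetric matrix $M$ to that of its symmetrization $S:=M+M^{T}$, and then to prove that $S$ is strictly diagonally dominant with strictly positive diagonal. For a symmetric matrix this already implies positive definiteness: for $y\neq 0$, using $2|y_iy_j|\leq y_i^2+y_j^2$ together with the symmetry of $S$, one gets $y^{T}Sy\geq\sum_i\bigl(S_{ii}-\sum_{j\neq i}|S_{ij}|\bigr)y_i^2>0$. So everything comes down to establishing the strict dominance inequalities.

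Write $M=(1+\eta)D-\gamma DP$, let $d_i>0$ denote the diagonal entries of $D$ and $p_{ij}$ the entries of the Markov matrix $P$, so that $m_{ij}=(1+\eta)d_i\mathbf{1}\{i=j\}-\gamma d_ip_{ij}$. Symmetrizing gives $S_{ii}=2(1+\eta)d_i-2\gamma d_ip_{ii}$ and, for $i\neq j$, $S_{ij}=-\gamma(d_ip_{ij}+d_jp_{ji})$. Since $p_{ij}\geq 0$ and $d_i>0$, every off-diagonal entry of $S$ is non-positive, so using $\sum_j p_{ij}=1$ we obtain $\sum_{j\neq i}|S_{ij}|=\gamma d_i(1-p_{ii})+\gamma\sum_{j\neq i}d_jp_{ji}$, and hence
\begin{align*}
S_{ii}-\sum_{j\neq i}|S_{ij}| &=2(1+\eta)d_i-\gamma d_i-\gamma\sum_{j}d_jp_{ji}\\
&=2(1+\eta)d_i-\gamma d_i-\gamma\,d^{T}p(\cdot|i).
\end{align*}
Moreover $S_{ii}=2d_i(1+\eta-\gamma p_{ii})\geq 2d_i(1-\gamma)>0$, because $p_{ii}\leq 1$, $\eta\geq 0$ and $\gamma<1$; so the diagonal of $S$ is automatically positive, and strict diagonal dominance of row $i$ is exactly the inequality $1+\eta>\tfrac{\gamma}{2}\bigl(1+\tfrac{d^{T}p(\cdot|i)}{d_i}\bigr)$.

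The remaining task is to deduce this inequality, for every $i$, from the assumed lower bound on $\eta$; this is where both the clause $\eta\geq 0$ and the hypothesis $\gamma<1$ enter. Writing $r_i:=d^{T}p(\cdot|i)/d_i\geq 0$: if $r_i\geq 1$ then $\tfrac{\gamma}{2}(1+r_i)\leq\gamma r_i\leq 1+\eta$, the last step by the assumption $\eta\geq\gamma r_i-1$; if $r_i<1$ then $\tfrac{\gamma}{2}(1+r_i)<\gamma<1\leq 1+\eta$. A moment's checking shows the inequality is in fact strict in every case (for $r_i>1$ it is strict already at the first step; for $r_i\leq 1$ strictness comes from $\gamma<1$ together with $\eta\geq 0$). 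Hence $S$ is strictly diagonally dominant with positive diagonal, so $S$, and therefore $M$ by Lemma~\ref{pd_lemma}, is positive definite. The whole argument is elementary; the only points that need genuine care are the bookkeeping in forming $M+M^{T}$ and its off-diagonal row sums, and confirming that the deliberately simple (and non-tight) bound $\eta\geq\max_i(\gamma r_i-1)$, combined with $\eta\geq 0$ and $\gamma<1$, really does force the strict diagonal dominance of $S$.
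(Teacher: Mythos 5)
Your proof is correct and follows essentially the same route as the paper's: symmetrize $M$ via Lemma~\ref{pd_lemma}, compute the row/column sums of the off-diagonal entries, and show that the hypothesis on $\eta$ forces $2m_{ii}-\sum_{j\neq i}|m_{ij}|-\sum_{j\neq i}|m_{ji}|>0$, i.e.\ strict diagonal dominance of $S=M+M^{T}$ with positive diagonal. The only (cosmetic) difference is that you convert diagonal dominance into positive definiteness by the elementary estimate $2|y_iy_j|\leq y_i^2+y_j^2$, whereas the paper invokes the Gershgorin circle theorem on the eigenvalues of $S$; your explicit case analysis on $r_i$ also makes the strictness of the final inequality more careful than the paper's one-line appeal to the hypothesis.
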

\begin{proof}
Consider the symmetric matrix $S=M+M^T$. From Lemma \ref{pd_lemma}, it is enough to show that $S$ is positive definite. Since $S$ is symmetric it is diagonalizable. Therefore it is enough to show that the eigen-values of $S$ are positive. From the Gershgorin circle theorem (see \cite{golub4matrix}) for any eigen value $\lambda$ of $S$ there exists $i$ such that
\begin{align*}
    &|\lambda -2m_{ii}|\leq \sum_{j\neq i}|m_{ij}|+\sum_{j\neq i}|m_{ji}|\\
    \implies & \lambda \geq 2m_{ii}- \sum_{j\neq i} |m_{ij}|-\sum_{j\neq i}|m_{ji}|.
\end{align*}
Now $m_{ii}=d_{i}\left((1+\eta)-\g p(i|i)\right)$ and for $i \neq j$ we have $ m_{ij}=-d_{i}\g p(j|i)$. Therefore 
$m_{ii}-\sum_{j\neq i}|m_{ij}|=(1+\eta-\gamma)d_{i}$ and $m_{ii}-\sum_{j\neq i}|m_{ji}|=\left((1+\eta)d_{i}-\gamma d^Tp(i|.)\right),$
\begin{align*}
\implies &\lambda \geq (1+\eta-\gamma)d_{i}+\left((1+\eta)d_{i}-\gamma d^Tp(i|.)\right)>0
\end{align*}
from the hypothesis $\eta>\displaystyle \max_{i} \frac{\gamma d^Tp(i|.)}{d_i}-1$. We see that every eigen-value of $S>0$ i.e., $S$ is positive definite. Hence $M$ is positive definite.
In particular, given the behaviour policy $\mu$ and the target policy $\pi$, there exists $\eta >0$ such that $A=\Phi^TD_{\mu}\left((1+\eta)I-\gamma P_{\pi}\right)\Phi$ is positive definite.
\end{proof}
%%%%%%%%%%%%%%%%%%%%%%%%%%%%%%
To describe the point of convergence of our algorithm consider for a given policy $\mu$ and a parameter $\eta$, $T^{\eta}_{\mu}: \R^{|S|}\rightarrow \R^{|S|}$ as $T^{\eta}_{\mu}=\frac{1}{1+\eta}T_{\mu}$. We state and prove the following properties about $T^{\eta}_{\mu}.$
\begin{lemma}
\label{T_approx}
$T^{\eta}_{\mu}$ is a $\|.\|_{\infty}$-contraction and converges point-wise to $T_{\mu}$ as $\eta \rightarrow 0.$
\end{lemma}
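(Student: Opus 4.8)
The plan is to reduce both assertions to the standard fact that the (policy-evaluation) Bellman operator $T_{\mu}$ is a $\gamma$-contraction with respect to $\|\cdot\|_{\infty}$ (see, e.g., \cite{bertsekas1996neuro}), together with the elementary observation that $\frac{1}{1+\eta}\in(0,1]$ for every $\eta\ge 0$ and $\frac{1}{1+\eta}\to 1$ as $\eta\to 0$.

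For the contraction claim, I would recall that $T_{\mu}$ is affine, $T_{\mu}V = r_{\mu}+\gamma P_{\mu}V$, with $P_{\mu}$ row-stochastic, so that $\|T_{\mu}V-T_{\mu}V'\|_{\infty}=\gamma\|P_{\mu}(V-V')\|_{\infty}\le \gamma\|V-V'\|_{\infty}$ for all $V,V'\in\R^{|S|}$, the last step holding because each row of $P_{\mu}$ is a probability vector. Multiplying through by the nonnegative constant $\frac{1}{1+\eta}$ immediately gives $\|T^{\eta}_{\mu}V-T^{\eta}_{\mu}V'\|_{\infty}\le \frac{\gamma}{1+\eta}\|V-V'\|_{\infty}$, and since $\eta\ge 0$ and $0<\gamma<1$ the modulus $\frac{\gamma}{1+\eta}$ lies in $[0,\gamma]\subset[0,1)$. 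Hence $T^{\eta}_{\mu}$ is a $\|\cdot\|_{\infty}$-contraction, with contraction factor no worse than that of $T_{\mu}$ and, importantly, bounded by $\gamma$ uniformly in $\eta\ge 0$.

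For the pointwise convergence, I would fix an arbitrary $V\in\R^{|S|}$ and note that $T^{\eta}_{\mu}V-T_{\mu}V=\left(\frac{1}{1+\eta}-1\right)T_{\mu}V=-\frac{\eta}{1+\eta}\,T_{\mu}V$, so $\|T^{\eta}_{\mu}V-T_{\mu}V\|_{\infty}=\frac{\eta}{1+\eta}\|T_{\mu}V\|_{\infty}\to 0$ as $\eta\to 0$, because $\|T_{\mu}V\|_{\infty}$ is finite and independent of $\eta$. This is precisely the assertion that $T^{\eta}_{\mu}$ converges pointwise to $T_{\mu}$.

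I do not expect any genuine obstacle in this lemma. The only points requiring care are to state explicitly the norm and modulus with which $T_{\mu}$ contracts, so that the bound $\frac{\gamma}{1+\eta}<1$ is manifest, and to record that this modulus stays bounded by $\gamma$ uniformly in $\eta$; this uniformity is what later lets one apply the Banach fixed point theorem to obtain a unique fixed point $V^{\eta}_{\mu}$ of $T^{\eta}_{\mu}$ and to compare it with the fixed point $V_{\mu}$ of $T_{\mu}$ as $\eta\to 0$.
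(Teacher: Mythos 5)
Your proof is correct and follows essentially the same route as the paper's: both arguments exploit the affine form $T_{\mu}V = r_{\mu} + \gamma P_{\mu}V$ with $P_{\mu}$ row-stochastic to get the contraction modulus $\frac{\gamma}{1+\eta} < 1$, and both obtain pointwise convergence directly from $T^{\eta}_{\mu}V = \frac{1}{1+\eta}T_{\mu}V \to T_{\mu}V$ as $\eta \to 0$. Your version is slightly more explicit in quantifying the convergence rate $\frac{\eta}{1+\eta}\|T_{\mu}V\|_{\infty}$ and in noting the uniform bound $\gamma$ on the modulus, but these are refinements of, not departures from, the paper's argument.
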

\begin{proof}
From the definition $T^{\eta}_{\mu}=\frac{1}{1+\eta}T_{\mu}.$ For any $V \in \R^{|S|},$
\begin{align*}
    T^{\eta}_{\mu}V=\frac{1}{1+\eta}T_{\mu}V \rightarrow T_{\mu}V \text{ as } \eta \rightarrow 0.
\end{align*}
It is easy to see that for any $V, W \in \R^{|S|}$,
\begin{align*}
 \|T^{\eta}_{\mu}V-T^{\eta}_{\mu}W\|_{\infty}
=&\frac{\g}{1+\eta}\|P_{\mu}(V-W)\|_{\infty}\\
\leq &\frac{\g}{1+\eta}\|V-W\|_{\infty}.
\end{align*}
Hence $T^{\eta}_{\mu}$ is $\|.\|_{\infty}$- contraction.
\end{proof}
% We note the following observation that describes the point of convergence, for any $V \in \R^{|S|}, T^{\eta}_{\mu}V=\frac{r_{\mu}}{1+\eta}+\frac{\g}{1+\eta}P_{\mu}V$ is a Bellman operator for a given policy with a scaled single stage reward and discount factor.
\section{About the Point of Convergence}
The algorithm converges to the point $\theta^*$ such that $b-A\theta^*=0.$ Now
\begin{align*}
    & b-A\theta^*=0 \\
\implies &  \Phi^TD_{\mu}\left((1+\eta)I-\gamma P_{\pi}\right)\Phi\theta^*= \Phi^TD_{\mu}r_{\pi}\\   
\implies &  \Phi^TD_{\mu}\left(I-\frac{\gamma}{1+\eta} P_{\pi}\right)\Phi\theta^*= \Phi^TD_{\mu}\frac{r_{\pi}}{1+\eta}\\ \implies &  \Phi^TD_{\mu}\Phi\theta^*= \Phi^TD_{\mu}\left(\frac{r_{\pi}}{1+\eta}+\frac{\gamma}{1+\eta} P_{\pi}\Phi\theta^*\right)\\
\implies & \Phi\theta^*= \Phi(\Phi^TD_{\mu}\Phi)^{-1}\Phi^TD_{\mu}\left(\frac{r_{\pi}}{1+\eta}+\frac{\gamma}{1+\eta} P_{\pi}\Phi\theta^*\right)\\
\implies & \Phi\theta^*=\Pi_{D_{\mu}} T_{\pi}^{\eta}\Phi\theta^*,
\end{align*}
where $\Pi_{D_{\mu}}=\Phi(\Phi^TD_{\mu}\Phi)^{-1}\Phi^TD_{\mu}$ is the projection operator that projects any $V \in \R^{|S|}$ to the subspace $\{\Phi r|r \in \R^d\}$ with respect to the norm $\|.\|_{D_{\mu}}.$
Hence we observe that, similar to online on-policy TD, 
our online off-policy TD is a projected stochastic fixed point iteration with respect to the perturbed Bellman operator $T^{\eta}_{\pi}.$ 
% However the projection $\Pi_{D_{\mu}}=\Phi(\Phi^TD_{\mu}\Phi)^{-1}\Phi^TD_{\mu}$ on the subspace $\{\Phi r|r \in \R^d\}$ is with respect to the norm $\|.\|_{D_{\mu}}.$

\begin{remark}
Note that the bound derived for $\eta$ in Theorem \ref{eta_bound} is a sufficient but not a necessary condition. If the value of $\eta$ is large, the algorithm converges but to a poorly approximated solution. Therefore, in experiments, we select the value of $\eta$ that is large enough to ensure convergence and small enough to ensure that approximation is reasonable. 
\end{remark}

% \begin{remark}
% Note that the bound derived for $\eta$ in Theorem \ref{eta_bound} is a sufficient but not a necessary condition. If the value of $\eta$ is large, the algorithm converges but to a distant solution from the optimal solution. Therefore, in experiments, we select the value of $\eta$ that is large enough to ensure convergence and small enough to ensure that the solution is close to the optimal solution -----. 
% \end{remark}

%We also note that large values of $\eta$ ensure convergence. On the other hand smaller values of $\eta$ ensure that $T_{\pi}^{\eta} $ is close to $T_{\pi}$
% 
% \begin{lemma}
% $T^{\eta}_{\mu}$ preserves partial order in the policy space.
% \end{lemma}
% \begin{proof}
% Given two policies $\mu_{1}$ and $\mu_{2}$ with $\mu_{1} \succeq \mu_{2}$, it is enough to show that $V^{\eta}_{\mu_{1}} \geq V^{\eta}_{\mu_{2}}$ where $V^{\eta}_{\mu_{1}}, V^{\eta}_{\mu_{2}}$ are the fixed points of $T^{\eta}_{\mu_{1}}$ and $T^{\eta}_{\mu_{2}}$ respectively. Now by definition of the partial order between policies we have
% \begin{align*}
%     \mu_{1} \succeq \mu_{2} \iff V_{\mu_{1}} \geq V_{\mu_{2}}.
% \end{align*}
% Now for any $n$
% \begin{align*}
%     &V_{\mu_{1}} \geq V_{\mu_{2}}
% \implies T^{n}_{\mu_{1}} V_{\mu_{1}} \geq T^{n}_{\mu_{2}} V_{\mu_{2}}\\
% \implies & \frac{T^{n}_{\mu_{1}} V_{\mu_{1}}}{(1+\eta)^{n}} \geq \frac{T^{n}_{\mu_{2}} V_{\mu_{2}}}{(1+\eta)^{n}}\\
% \implies & V^{\eta}_{\mu_{1}} \geq V^{\eta}_{\mu_{2}}
% \end{align*}
% \end{proof}
%

%%%%%%%%%%%%%%%%%%%%%%%%%%%%%%%

\section{Experiments and Results}
\begin{figure}
    \centering
    \includegraphics[scale = 0.4]{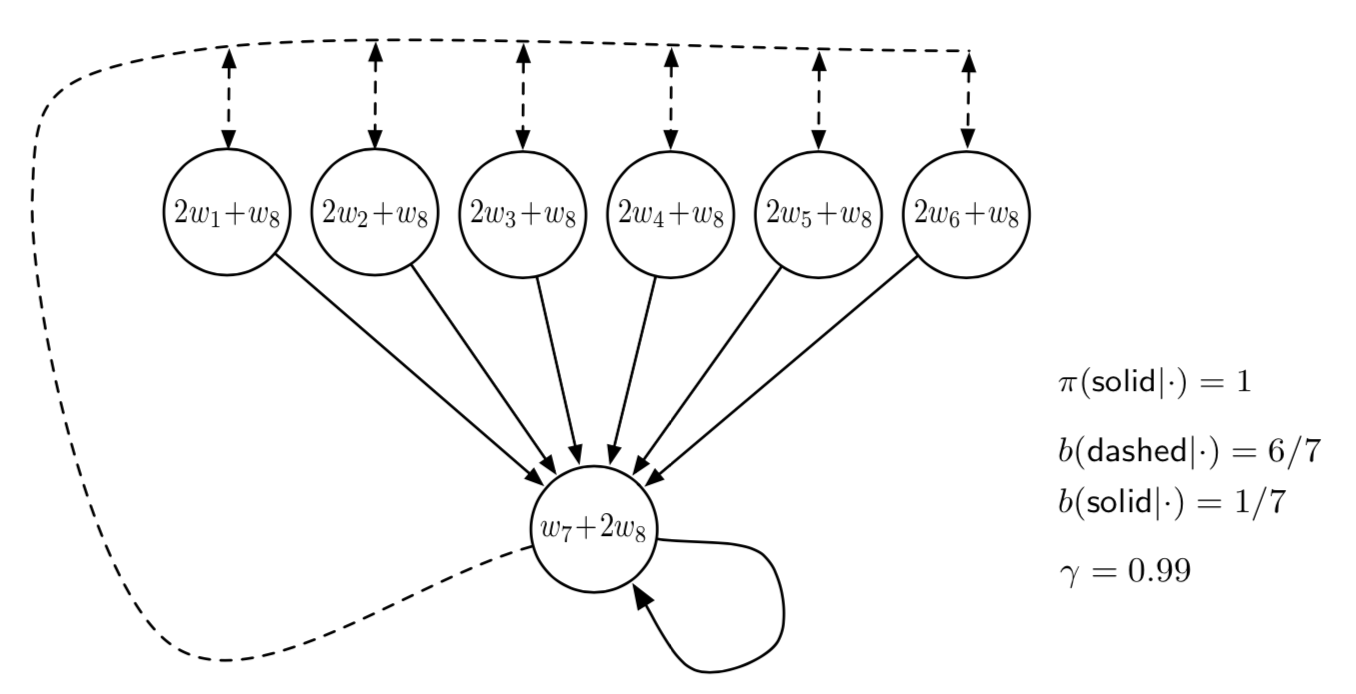}
    \caption{Baird's Counterexample. Figure taken from \cite{baird}}
    \label{bairdsEx}
\end{figure}
In this section, we describe the performance of our proposed algorithm on three tasks. We first perform experiments on two benchmark counter-examples for off-policy divergence. Finally, we perform our experiment on a 3-state MDP example and analyze various properties of our proposed algorithm \footnote{The implementation codes for our experiments is available at: \url{https://github.com/raghudiddigi/Off-Policy-Convergent-Algorithm} }. The evaluation metric considered is Root Mean Square Error (RMSE) defined as:
\begin{align}
    RMSE(\theta) = \sqrt{d_\mu(s)(V_\pi(s) - \widehat{V}_\theta(s))^2},
\end{align}
where $\theta$ is the parameter that is used to approximate the value function, $d_\mu$ is the stationary distribution associated with the behavior policy $\mu$, $V_\pi$ is the exact value function of the target policy $\pi$ and $\widehat{V}_\theta$ is the approximate value function that is estimated. We perform $10$ independent runs and present the average of RMSE obtained on all the three experiments. For comparison purposes, we also implement Emphatic TD (ETD(0)) algorithm \cite{sutton2016emphatic} and a gradient-family algorithm, linear TD with
gradient correction (TDC) \cite{sutton2009fast}.

First, we consider the ``$\theta \xrightarrow{} 2\theta$" example (\cite{tsitsiklis1996feature}, Section 3 of \cite{sutton2016emphatic}). In this example, there are two states - $1$ and $2$ and two actions - 'left' and 'right'. Left action in state $1$ results in state $1$, while right action results in state $2$. Similarly, right action in state $2$ results in state $2$ and left action results in state $1$. The target policy is to take right in both the states, whereas behavior policy is to take left and right actions with equal probability in both the states. The value function is linearly approximated with one feature. The feature of state $1$ is $1$ and that of state $2$ is $2$. The discount factor is taken to be $0.9$. The update parameter $\theta$ is initialized to $1$ and the $\eta$ for our algorithm is taken to be $1$. The step-size for the algorithms is held constant at $0.01$. In Figure \ref{fig1}, we show the performance of algorithms over $10000$ iterations. We can see that the standard off-policy TD(0) diverges whereas the other three algorithms including our proposed perturbed off-policy TD(0) converges to a point where the RMSE is zero. 
\begin{figure}
    \centering
    \includegraphics[scale = 0.5]{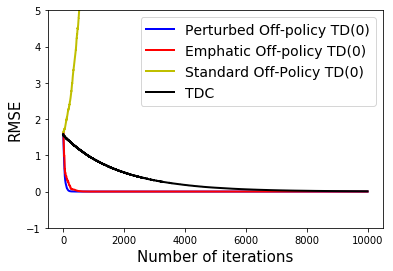}
    \caption{Performance of algorithms on ``$\theta \xrightarrow{} 2\theta$". RMSE is the value averaged across $10$ independent runs}
    \label{fig1}
\end{figure}

Next, we consider the ``7-star" example, first proposed in \cite{baird1995residual}. This is completely described in Figure \ref{bairdsEx} \cite{baird}. There are $7$ states represented as circles. The expression inside the circle $i$ represents the linear approximation of the state $i$. The policy $\pi$ in Figure \ref{bairdsEx} represents the target policy and $b$ represents the behavior policy. We run all the algorithms, i.e., standard off-policy TD(0), Emphatic off-policy TD(0), TDC and our algorithm, Perturbed off-policy TD(0) for $1000000$ iterations. The step-size for the algorithms is set to $0.0001$ \footnote{We have run experiments with three other step-sizes and included it in our supplementary material. Please find them at: \url{https://github.com/raghudiddigi/Off-Policy-Convergent-Algorithm/blob/master/Supplementary.pdf}}. From Figure \ref{fig2}, we can see that our perturbed off-policy TD converges to the exact solution while the Emphatic TD(0) appears to oscillate. On the other hand, the TDC algorithm appears to converge slowly. Moreover, it is known that standard off-policy diverges for this example, which can also be observed from Figure \ref{fig2}. 
%Moreover, in Figure \ref{}, we show the performance of our algorithm over all the $10$ runs. 
\begin{figure}
    \centering
        \includegraphics[scale = 0.5]{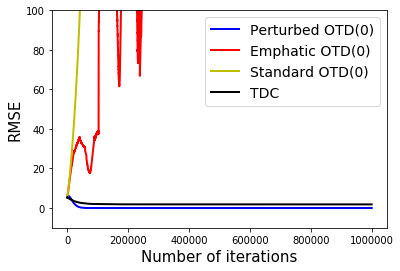}
    \caption{Performance of algorithms on ``Baird's Counter-example". RMSE is the value averaged across $10$ runs}
    \label{fig2}
\end{figure}

Finally, we construct an MDP as follows. There are $3$ states and $2$ actions - 'left' and 'right' possible in each state. The 'Left' action in states $1$ and $2$ leads to state $1$. And the 'right' action in states $2$ and $3$ leads to state $3$. Finally 'left' action in state $3$ leads to state $2$. The single-stage rewards in all transitions is taken to be $1$ and the discount factor is $0.9$. The target policy $\pi =[ [0,1],[0.5,0.5],[1,0]]$ and the behavior policy $\mu = [[0.9,0.1],[0.5,0.5],[0.1,0.9]]$ (where the first component represents the probability to take 'left' and the second component represents the probability to take 'right'). The feature vectors of the three states are $[1,0],[1,1],[0,1]$ respectively. The step-size for the algorithms is set to $0.0001$. We run all the algorithms for $1000000$ iterations. From Figure \ref{fig3}, we can see that perturbed off-policy TD(0) converges. For this experiment, the best possible RMSE is $2.548$ and our proposed algorithm achieves $2.97$. 
\begin{figure}
    \centering
    \includegraphics[scale = 0.5]{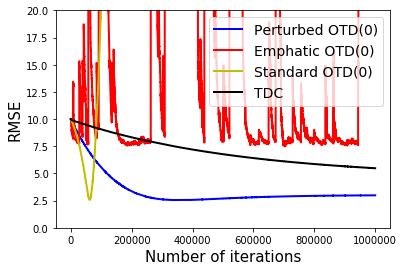}
    \caption{Performance of algorithms on a 3-state MDP. RMSE is the value averaged across $10$ runs.The best possible RMSE for this MDP is $2.548$ }
    \label{fig3}
\end{figure}

In the experimental setting above, the value of $\eta$ is set to $0.5$. In Figure \ref{fig4}, we run our algorithm with two other values of $\eta = 0.4$ and $0.6$ respectively. We observe that, for $\eta = 0.4$, convergence is not guaranteed as this $\eta$ correction is not enough. On the other hand, for $\eta = 0.6$, convergence is ensured. However, the converged solution is not close due to the over-correction. Hence, it is to be noted that an optimal value of $\eta$ is desired for ensuring the convergence and near-optimal solution at the same time (recall that a higher value of $\eta$ is enough to ensure the convergence alone). 
\begin{figure}
    \centering
    \includegraphics[scale = 0.5]{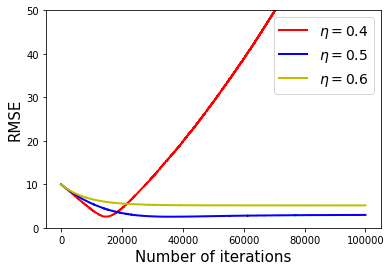}
    \caption{Performance of our proposed algorithm with three different $\eta$ values }
    \label{fig4}
\end{figure}

\begin{remark}
It has to be noted that the objective of the experiments is to show that our proposed algorithm mitigates the divergence problem of the off-policy TD algorithm. Moreover, if we choose a good value of $\eta$, it ensures that the algorithm converges to a solution closer to the optimal solution. At this point, we do not make any claims about the quality of the converged solution compared to the Emphatic TD(0) and TDC algorithms. We have seen that our proposed algorithm performed better than Emphatic TD and TDC in the last two examples. Further empirical analysis is needed to compare the quality of the converged solution with Emphatic TD(0), TDC as well as other off-policy algorithms in the literature. 
\end{remark}

\section{Conclusions and Future Work}
In this work, we have proposed an off-policy TD algorithm for mitigating the divergence problem of the standard off-policy TD algorithm. Our proposed algorithm makes use of a penalty parameter to ensure the stability of the iterates. We have then proved that this addition of penalty parameter makes the matrix $A$ positive definite, which in turn ensures the convergence. Finally, we empirically show the convergence on benchmark counter-examples for off-policy divergence. 

As seen from the experiments, the choice of $\eta$ is critical for our algorithm. The lower-bound that we have provided in our analysis is not tight and coming up with a tight bound is an interesting future direction. Also, in future, we would like to extend our algorithm to include eligibility traces and study its applications on real world problems.

%%%%%%%%%%%%%%%%%%%%%%%%%%%%%%%%%%%%%%%%%%%%%%%%%%%%%%%%%%%%%%%%%%%%%%%%%%%%%%%%

\bibliographystyle{IEEEtran}
\bibliography{references}
\end{document}